\documentclass[11pt]{article}
\usepackage{geometry}
\usepackage{amsmath, amssymb, amsthm}
\usepackage{bm, graphicx}
\usepackage[colorlinks=true]{hyperref}
\usepackage[backend=biber]{biblatex}
\usepackage{algorithm}
\usepackage{algpseudocode}
\usepackage{booktabs}
\usepackage{pifont}
\usepackage{multirow}
\usepackage{tikz}
\usetikzlibrary{arrows.meta, positioning, decorations.pathreplacing}

\newcommand{\cmark}{\ding{51}}
\newcommand{\xmark}{\ding{55}}

\newtheorem{theorem}{Theorem}
\newtheorem{definition}{Definition}
\newtheorem{proposition}{Proposition}
\newtheorem{remark}{Remark}

\newtheorem{assumption}{Assumption}

\begin{document}

\title{Dynamics-Level Watermarking of Flow Matching Models with Random Codes}
\author{Shuchan Wang}
\date{\today}
\maketitle

\begin{abstract}
We introduce a dynamics-level approach to watermarking generative models.
Rather than embedding signals into model weights or outputs, we embed the
watermark directly into the learned continuous dynamics---the velocity
field of a flow matching model. We formulate this as random coding over a
continuous channel: a key-dependent perturbation is added during training,
and the message is recovered at detection time from black-box queries. The
perturbation is designed to leave the generated distribution unchanged.
Experiments on MNIST and CIFAR-10 across different architectures confirm
reliable message recovery, preserved generation quality, and chance-level
decoding accuracy without the secret key.
\end{abstract}

\section{Introduction}

\subsection{Background: Flow Matching Generative Models}

Flow matching~\cite{lipmanflow,liuflow} has emerged as a powerful framework
for training continuous normalizing flows. Given samples $x_1 \sim
q_{\text{data}}$ from a data distribution and $x_0 \sim p_{\text{noise}}$
from a simple base distribution, flow matching learns a time-dependent
velocity field $v_\theta(x,t)$ that transports $p_{\text{noise}}$ to
$q_{\text{data}}$ along the trajectory
\begin{equation}
\frac{dx_t}{dt} = v_\theta(x_t, t), \quad t \in [0,1].
\end{equation}
The model is trained by regressing $v_\theta$ against a conditional
velocity field $u(x_t|x_0,x_1)$ defined along interpolating paths
$x_t = (1-t)x_0 + t x_1$. At convergence, integrating the ODE from
$x_0 \sim p_{\text{noise}}$ produces samples $x_1 \sim q_{\text{data}}$.

The key object learned by flow matching is the velocity field
$v_\theta: \mathbb{R}^d \times [0,1] \to \mathbb{R}^d$, a continuous
function that completely characterizes the generative process. Training
such a model requires substantial computational resources, making the
resulting velocity field valuable intellectual property in need of
protection against unauthorized use.

\subsection{Requirements for Generative Model Watermarking}

A watermarking scheme for a generative model should satisfy the following
properties. The watermark should be embedded in the model itself, so that
it persists across different generated outputs and cannot be circumvented
simply by bypassing a post-processing step (model-level embedding).
Detection should require only query access to the model, without access to
weights, training data, or the generation pipeline, enabling verification
of deployed models accessible only through APIs (black-box
verifiability). Watermarked models should generate samples that are
statistically indistinguishable from those of unwatermarked models, with
no visible artifacts or degradation in fidelity (quality preservation).
Detection and message recovery should require knowledge of a secret key;
without the key, an attacker should be unable to recover the embedded
message beyond chance level (keyed security). Finally, the scheme
should support encoding a multi-bit message, enabling applications beyond
binary ownership verification, including owner identification, model
versioning, customer-specific serialization, and traitor tracing across
distributed copies (multi-bit capacity).

\subsection{Limitations of Existing Approaches}

We categorize existing watermarking methods by what they watermark.
Output-level methods embed signals into generated
samples~\cite{wen2023tree,fernandez2023stable}. These are not model-level:
removing the post-processing step eliminates the watermark, and they
provide no evidence of model ownership if the model itself is copied.
Weight-level methods embed bits into model
parameters~\cite{uchida2017embedding}. While model-level, they require
white-box access and are fragile to fine-tuning, pruning, or
quantization. Behavior-level methods use backdoor triggers---specific inputs that
produce predetermined outputs~\cite{adi2018turning,chen2019blackmarks,fei2025high}.
These methods operate at the model level and support black-box
verification. However, trigger-based approaches can introduce visible
artifacts or statistical irregularities, and often lack strong
cryptographic keying: once a trigger is discovered, any party can verify
the watermark. Recent work extends this to multi-bit payloads via channel
coding~\cite{fei2025high}.

Table~\ref{tab:comparison} in Appendix~\ref{app:comparison} summarizes how
existing paradigms compare against the five desiderata. To our knowledge,
no prior work embeds the watermark directly into the learned continuous
dynamics of the generative process itself. Existing black-box multi-bit
methods instead rely on trigger behaviors, output statistics, or
parameter-space embeddings, rather than perturbations of the transport
dynamics.

\subsection{The Gap: Dynamics-Level Watermarking}

In flow matching, the velocity field $v_\theta(x,t)$ is a continuous
function that defines the generative dynamics. This perspective reveals an underexplored opportunity: the velocity field can be viewed as a high-dimensional continuous channel into which information may be
embedded through structured perturbations, while preserving the overall
behavior of the model.

This gap motivates our core question:
\begin{quote}
\emph{Can we embed a verifiable, keyed, multi-bit message directly into
the learned velocity field of a flow matching model, such that (1) the
watermark is detectable with only black-box access, (2) generation
quality is preserved, and (3) reliable detection requires access to a
secret key?}
\end{quote}

\subsection{Our Approach and Contributions}

Our specific contributions are as follows. First, we formulate
watermarking directly at the level of the learned dynamics, embedding the
message as a key-dependent perturbation of the continuous velocity field
rather than into weights or outputs (model-level embedding). Second, we
design the perturbation to integrate to zero over each generative
trajectory, so that the terminal distribution of generated samples is
preserved in practice (quality preservation). Third, the watermark is keyed by a secret projection matrix and an orthogonal
codebook; without this key, empirical detection accuracy matches the random
guess rate (keyed security). Fourth, the scheme supports multi-bit messages via the codebook construction, enabling identification beyond binary verification (multi-bit capacity).
Fifth, we validate the method on MNIST and CIFAR-10 across MLP and UNet
architectures, achieving $100\%$ message recovery with chance-level
false positive rate, FID ratios $0.95\times$--$1.1\times$, and
statistical separation exceeding $8.4\sigma$ across all settings
(empirical validation). These properties make dynamics-level watermarking
a potential foundation for applications such as model ownership
verification, traitor tracing, and regulatory compliance, though we leave
their deployment-scale evaluation to future work.

\begin{figure}[t]
\centering
\begin{tikzpicture}[
    node distance=0.6cm,
    box/.style={rectangle, draw, rounded corners, minimum width=2.3cm, minimum height=0.7cm, align=center, font=\footnotesize},
    smallbox/.style={rectangle, draw, rounded corners, minimum width=1.7cm, minimum height=0.7cm, align=center, font=\footnotesize},
    arrow/.style={->, >=stealth, thick},
]

\node[font=\small\bfseries, anchor=west] (trainlabel) at (-5.5, 0) {Training (Watermark Embedding)};

\node[smallbox, below left=0.5cm and -2.5cm of trainlabel] (msg) {Message $m$};
\node[smallbox, right=0.5cm of msg] (key) {Key $K = (P, C)$};
\node[box, right=0.6cm of key] (signal) {Watermark signal\\$s_{K,m}(x,t)$};
\node[box, right=0.6cm of signal] (add) {Add to velocity\\$v_\theta + \epsilon \cdot s$};
\node[box, right=0.6cm of add] (model) {Watermarked\\model $v_{\theta,K,m}$};

\draw[arrow] (msg) -- (key);
\draw[arrow] (key) -- (signal);
\draw[arrow] (signal) -- (add);
\draw[arrow] (add) -- (model);

\node[font=\small\bfseries, anchor=west] (detlabel) at (-5.5, -2.5) {Detection (Black-Box)};

\node[box, below=0.7cm of detlabel] (query) {Query model\\$v(x_i, t_i)$};
\node[box, right=0.6cm of query] (proj) {Project\\$P^T v$};
\node[box, right=0.6cm of proj] (demod) {Demodulate\\$\times \sin(2\pi t)$};
\node[box, right=0.6cm of demod] (corr) {Correlate with\\codebook $C$};
\node[box, right=0.6cm of corr] (out) {Decoded\\message $\hat{m}$};

\draw[arrow] (query) -- (proj);
\draw[arrow] (proj) -- (demod);
\draw[arrow] (demod) -- (corr);
\draw[arrow] (corr) -- (out);

\draw[arrow, dashed, gray] (key.south) -- ++(0, -0.5) -| (proj.north);
\node[font=\footnotesize\color{gray}] at (0.4, -2.7) {Same $K$};

\end{tikzpicture}
\caption{Overview of the proposed dynamics-level watermarking framework.
\textbf{Top (Training):} A message $m$ and secret key $K$ define a
time-modulated watermark signal $s_{K,m}$, which is added to the velocity
field during flow matching. \textbf{Bottom (Detection):} Given black-box
access to a model, the detector projects velocity queries into the code
space, demodulates with the temporal carrier, and correlates against the
codebook to recover the embedded message. The same key $K$ is used for
embedding and detection.}
\label{fig:overview}
\end{figure}

\section{Watermarking as Random Coding over Transport Dynamics}

In this section, we formalize our approach. We first describe the
information-theoretic motivation, then detail the construction of keys,
codes, and the time-modulated watermark, followed by the training and
detection procedures.

\subsection{Information-Theoretic Motivation}

From an information-theoretic perspective, our approach views the velocity
field as a continuous additive channel. Let $v_\theta(x,t)$ be the clean
velocity field and consider a watermarked version:
\begin{equation}\label{eq:vwm}
v_{\theta,K,m}(x,t) = v_\theta(x,t) + \eta(x,t) + \epsilon \cdot s_{K,m}(x,t),
\end{equation}
where $\eta(x,t)$ represents background variation from training
stochasticity and $s_{K,m}(x,t)$ is a key-dependent signal encoding
message $m$.

The detection problem becomes: given $N$ noisy observations of
$v_{\theta,K,m}(x_i,t_i)$, estimate $\hat{m}$ with high confidence.
This is a random coding problem over a continuous channel---a classical
setup in information theory \cite{cover1999elements} that guarantees
reliable decoding when the codebook is appropriately constructed and the
number of observations is sufficient.

\subsection{Message and Key: Framework}

Equation~\eqref{eq:vwm} reduces the watermarking problem to the
construction of the signal $s_{K,m}(x,t)$. We first give a general
formulation.

Let $m \in \mathcal{M}$ be a message from a set of
$|\mathcal{M}| := 2^L$ possible messages, and let $K$ be a secret key.
The key defines a codebook through a mapping
\begin{equation}
\phi_K : \mathcal{M} \to \mathcal{H},
\end{equation}
where $\mathcal{H}$ is a function space over $(x,t)$. The codeword for
message $m$ is $s_{K,m}(x,t) := \phi_K(m)$.

The design of $s_{K,m}$ must balance three considerations: (i) detectability---the signal must be recoverable from noisy observations
through a known detection procedure; (ii) quality preservation---the
signal must not alter the generated sample distribution, meaning it should
integrate to zero along trajectories; and (iii) security---without
knowledge of the key $K$, the watermark must be unrecoverable beyond chance level.

These considerations are formalized as the following three requirements
on the codebook.

\begin{definition}[Admissible codebook]\label{def:admissible}
A codebook $\{s_{K,m}\}_{m\in\mathcal{M}}$ is called \textbf{admissible}
if it satisfies the following conditions.
\begin{enumerate}
\item \textbf{Zero-mean over time.} For all $m \in \mathcal{M}$ and
all $x$,
\begin{equation}
\int_0^1 s_{K,m}(x,t) \, dt = 0.
\end{equation}
\item \textbf{Approximate orthogonality.} For any distinct
$m, m' \in \mathcal{M}$,
\begin{equation}
\langle s_{K,m}, s_{K,m'} \rangle \approx 0.
\end{equation}
\item \textbf{Energy normalization.} There exists a constant $C > 0$
such that for all $m \in \mathcal{M}$,
\begin{equation}
\|s_{K,m}\|^2 \approx C.
\end{equation}
\end{enumerate}
\end{definition}

Condition~1 ensures the watermark integrates to zero along trajectories,
preserving sample quality. Condition~2 ensures that different messages
are maximally separable at detection time. Condition~3 ensures that
all messages are equally detectable. These conditions are design
principles that guide our construction; they are neither necessary for
watermarking in general nor sufficient to guarantee empirical success.
We validate the practical performance of our construction in
Section~\ref{sec:experiments}.

\subsection{Concrete Construction}\label{sec:concrete}

We now present one construction satisfying
Definition~\ref{def:admissible}, chosen for its simplicity and exact
satisfaction of the orthogonality condition. Other choices are possible,
as discussed in Remark~\ref{rem:alternatives} below.

The secret key consists of two components, $K = (P, C)$. Here
$P \in \mathbb{R}^{D \times k}$ is a semi-orthogonal projection matrix
satisfying $P^T P = I_k$, generated by QR decomposition of a random matrix
seeded by $K$. This maps from the $D$-dimensional velocity space to a
$k$-dimensional latent code space. The component
$C = \{c_m\}_{m \in \mathcal{M}}$ is a codebook of $|\mathcal{M}| = 2^L$
unit vectors in $\mathbb{R}^k$, also generated via QR decomposition from
$K$, such that each $L$-bit message maps to a distinct codeword $c_m$. We
require $L \leq k$ so that pairwise orthogonal codewords can be assigned;
in practice we use $L < k$ and select $2^L$ approximately orthogonal
vectors from a random codebook, ensuring
$\langle c_m, c_{m'} \rangle \approx \delta_{m,m'}$ with high probability
for sufficiently large $k$.

We define the watermark signal for message $m$ as
\begin{equation}\label{eq:watermark_signal}
s_{K,m}(x,t) = \sin(2\pi t) \cdot P c_m.
\end{equation}
The signal separates naturally into a spatial component $P c_m$, which
embeds the codeword into the velocity space, and a temporal carrier
$\sin(2\pi t)$. The following theorem verifies that this construction
satisfies all requirements.

\begin{theorem}[Verification of design properties]\label{thm:properties}
The construction~\eqref{eq:watermark_signal} satisfies the three conditions
of Definition~\ref{def:admissible}.
\end{theorem}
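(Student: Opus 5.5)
The plan is to verify the three conditions of Definition~\ref{def:admissible} one at a time, exploiting the separable form $s_{K,m}(x,t) = \sin(2\pi t)\, P c_m$. The spatial factor $P c_m$ does not depend on $(x,t)$, so every quantity reduces to a scalar time integral multiplied by an inner product in $\mathbb{R}^D$. For concreteness I will take the inner product on $\mathcal{H}$ to be
\[
\langle f, g\rangle = \int_0^1 \mathbb{E}_{x}\big[f(x,t)^T g(x,t)\big]\,dt,
\]
where $x$ ranges over any probability measure (for example the trajectory marginals). Since the signal is independent of $x$, the expectation over $x$ is trivial and the choice of measure does not matter.

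\textbf{Zero mean.} For every $m$ and every $x$,
\[
\int_0^1 s_{K,m}(x,t)\,dt = \Big(\int_0^1 \sin(2\pi t)\,dt\Big) P c_m .
\]
The scalar factor equals $\big[-\cos(2\pi t)/(2\pi)\big]_0^1 = 0$, because the carrier completes exactly one full period on $[0,1]$. This condition therefore holds exactly.

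\textbf{Orthogonality and energy.} First compute
\[
\langle s_{K,m}, s_{K,m'}\rangle = \Big(\int_0^1 \sin^2(2\pi t)\,dt\Big)\, c_m^T P^T P\, c_{m'} = \tfrac12\, c_m^T c_{m'},
\]
using $\int_0^1 \sin^2(2\pi t)\,dt = \tfrac12$ and $P^T P = I_k$. The semi-orthogonality of $P$ is the key point: the embedding into $\mathbb{R}^D$ is an isometry, so all inner-product structure of the codebook carries over unchanged.

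Setting $m = m'$ and using $\|c_m\| = 1$ gives $\|s_{K,m}\|^2 = \tfrac12$. This is Condition~3 with $C = 1/2$, and it holds exactly.

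For $m \neq m'$ the inner product equals $\tfrac12\langle c_m, c_{m'}\rangle$. The argument then splits into two cases.
\begin{itemize}
\item If $2^L \le k$, the QR construction yields exactly orthonormal codewords, so the inner product is $0$.
\item If $2^L > k$ (the practical regime described above), exact orthogonality is impossible. Here I would appeal to concentration for random unit vectors: for independent uniform $c_m, c_{m'}$ on $S^{k-1}$, $\langle c_m, c_{m'}\rangle$ has mean zero and variance $1/k$, with sub-Gaussian tail $\Pr(|\langle c_m,c_{m'}\rangle|>\tau)\le 2e^{-k\tau^2/2}$. A union bound over the $\binom{2^L}{2}$ pairs then bounds all cross terms by $\tau = O\big(\sqrt{L/k}\big)$ with high probability.
\end{itemize}

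\textbf{Main obstacle.} The hardest step is Condition~2 in the regime $2^L > k$, because ``$\approx 0$'' must be given a precise meaning. I would state it as the quantitative bound $\max_{m\neq m'} |\langle s_{K,m}, s_{K,m'}\rangle| \le \tfrac12\,O\big(\sqrt{L/k}\big)$ with high probability over the key. This requires treating the selected codewords as approximately uniform on the sphere, or using the columns of independent QR draws. The remaining steps are exact one-line calculations.
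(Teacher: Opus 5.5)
Your proof is correct and follows the paper's argument: factor out the constant vector $Pc_m$, then use $\int_0^1\sin(2\pi t)\,dt=0$, $\int_0^1\sin^2(2\pi t)\,dt=\tfrac12$ and $P^\top P=I_k$ to obtain $\langle s_{K,m},s_{K,m'}\rangle=\tfrac12\,c_m^\top c_{m'}$ and $\|s_{K,m}\|^2=\tfrac12$. The only difference is your concentration branch for $2^L>k$. The paper simply takes the codewords to be exactly orthonormal, $c_m^\top c_{m'}=\delta_{m,m'}$, which is the regime of its experiments ($2^L=k=32$); that extra branch goes beyond the paper and is sound under your stated assumption that the codewords are independent and uniform on the sphere.
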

\begin{proof}
We verify each condition.
\begin{enumerate}
\item \textbf{Zero-mean over time:}
$\int_0^1 s_{K,m}(x,t) dt = P c_m \int_0^1 \sin(2\pi t) dt = 0$.
\item \textbf{Orthogonality:}
Since $P^T P = I_k$, we have $(P c_m)^\top (P c_{m'}) = c_m^\top c_{m'} =
\delta_{m,m'}$. With the temporal term,
$\langle s_{K,m}, s_{K,m'} \rangle = \mathbb{E}_t[\sin^2(2\pi t)] \cdot
\delta_{m,m'} = \frac{1}{2} \delta_{m,m'}$.
\item \textbf{Energy normalization:}
$\|s_{K,m}\|^2 = \frac{1}{2} \|P c_m\|^2 = \frac{1}{2}$ for all $m$,
since $P$ preserves norms and $\|c_m\| = 1$.
\end{enumerate}
\end{proof}

\begin{remark}\label{rem:alternatives}
This construction is one instance of a broader class. Alternative choices
include different carrier functions (e.g., $\cos(2\pi t)$, $\sin(4\pi t)$,
wavelet-based carriers), learned projections instead of random $P$, or
sparse codes instead of orthogonal codes. Preliminary
experiments with frequency- and time-division multiplexing did not improve
capacity over the single-carrier design (see
Appendix~\ref{app:multiplexing}); we leave a systematic investigation of
multi-carrier schemes to future work.
\end{remark}

\subsection{Watermarked Training}

We now describe how the watermark is embedded during flow matching
training. Given a training pair $(x_0, x_1)$ and interpolated state
$x_t = (1-t)x_0 + t x_1$, the standard flow matching target is
$u = x_1 - x_0$. We modify this to include the watermark:
\begin{equation}
\tilde{v}(x_t, t) = u + \epsilon_0 \cdot s_{K,m}(x_t, t)
= u + \epsilon_0 \sin(2\pi t) \cdot P c_m,
\end{equation}
where $\epsilon_0 > 0$ is the nominal perturbation amplitude applied
during training. The effective watermark strength $\epsilon$ in \eqref{eq:vwm} may differ from $\epsilon_0$,
because the $\mathcal{L}_{\text{wm}}$ term in \eqref{eq:loss} provides an additional
gradient that amplifies the effective
$\epsilon$.

The watermarked model $v_\theta$ is optimized with a combined objective:
\begin{equation}\label{eq:loss}
\mathcal{L}(\theta) = \underbrace{\mathbb{E}_{x,t}\left[\|v_\theta(x,t)
- \tilde{v}(x,t)\|^2\right]}_{\mathcal{L}_{\text{vel}}}
+ \lambda \cdot \underbrace{\mathbb{E}_{x,t}\left[-\sin(2\pi t) \cdot
\langle P^T v_\theta(x,t), c_m \rangle\right]}_{\mathcal{L}_{\text{wm}}},
\end{equation}
where $\lambda > 0$ balances the two terms. The first term
$\mathcal{L}_{\text{vel}}$ ensures the model learns the watermarked
velocity field. The second term $\mathcal{L}_{\text{wm}}$ directly rewards the model for producing velocity vectors whose demodulated projection aligns with the
target codeword $c_m$ (see Section~\ref{sec:detection}). The following proposition shows how these two terms combine to
determine the effective watermark strength.

\begin{proposition}[Optimal predictor under the watermark objective]
\label{prop:optimal_predictor}
Assume expectations are finite and optimization is performed over all
square-integrable vector fields
$v:\mathbb{R}^D\times[0,1]\to\mathbb{R}^D$.
Then the unique population minimizer of~\eqref{eq:loss} is
\begin{equation}
v^\star(x,t)
=
u(x,t)
+
\left(\epsilon_0+\frac{\lambda}{2}\right)
\sin(2\pi t)\,Pc_m.
\end{equation}
\end{proposition}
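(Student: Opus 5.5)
The plan is to reduce the population problem to a family of pointwise, strictly convex quadratic problems, one for each $(x,t)$, and solve each in closed form. First I will make the meaning of $u(x,t)$ precise. In \eqref{eq:loss} the regression target is the conditional target $x_1-x_0$, which is random given $x_t$. The only reading under which the statement is correct is therefore $u(x,t):=\mathbb{E}[x_1-x_0\mid x_t=x,\,t]$, the marginal flow-matching velocity. I will state this identification explicitly at the start.

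Second, I will rewrite the watermark term using $\langle P^\top v, c_m\rangle=\langle v, Pc_m\rangle$, and write $w:=Pc_m$ and $a(t):=\sin(2\pi t)$. Then $\tilde v = (x_1-x_0)+\epsilon_0 a(t) w$. By the tower property, conditioning on $(x_t,t)$, I will apply the standard bias--variance decomposition
\[
\mathbb{E}\bigl[\|v(x_t,t)-\tilde v\|^2\bigr]=\mathbb{E}\bigl[\|v(x_t,t)-u(x_t,t)-\epsilon_0 a(t) w\|^2\bigr]+\mathrm{const},
\]
where the constant is the conditional variance of $x_1-x_0$. It does not depend on $v$ and is finite by the finiteness assumption. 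The cross term vanishes because $v(x_t,t)$ is $(x_t,t)$-measurable.

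Third, the whole objective becomes $\mathbb{E}_{(x,t)}[F_{x,t}(v(x,t))]+\mathrm{const}$ with
\[
F_{x,t}(y)=\|y-u-\epsilon_0 a w\|^2-\lambda a\langle y,w\rangle .
\]
Each $F_{x,t}$ is strictly convex in $y\in\mathbb{R}^D$ with Hessian $2I$. Setting its gradient $2(y-u-\epsilon_0 a w)-\lambda a w$ to zero gives
\[
y^\star=u+\bigl(\epsilon_0+\tfrac{\lambda}{2}\bigr)a(t)\,w .
\]
Completing the square shows that $F_{x,t}(y)=\|y-y^\star\|^2+g(x,t)$, where $g$ does not depend on $y$. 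Hence the objective equals $\mathbb{E}\|v-v^\star\|^2$ plus a constant independent of $v$.

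Fourth, existence and uniqueness follow from this form. The candidate $v^\star$ is square-integrable because $u$ is, since $u$ is a conditional expectation of an $L^2$ variable, and $a(t)w$ is bounded. It is therefore admissible and attains the infimum. Any other minimizer must satisfy $\mathbb{E}\|v-v^\star\|^2=0$, so uniqueness holds almost everywhere with respect to the law of $(x_t,t)$, which is the natural sense of uniqueness in $L^2$.

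The main obstacle is not the calculus but the bookkeeping in step two and the uniqueness claim. One must justify that the linear term $\mathbb{E}[a\langle v,w\rangle]$ is finite for $L^2$ fields, which follows from Cauchy--Schwarz and $\|w\|=1$, so that the objective is well defined on the whole space. One must also note that "unique" holds only up to null sets of the marginal of $(x_t,t)$. I will state both points explicitly rather than leave them implicit.
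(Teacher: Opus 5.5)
Your argument is correct and follows the same route as the paper: reduce to a pointwise strictly convex quadratic in $v(x,t)$, set the gradient $2(v-\tilde v)-\lambda\sin(2\pi t)Pc_m$ to zero, and use strict convexity for uniqueness. Your extra steps make explicit what the paper's ``the objective is separable over $(x,t)$'' tacitly assumes when it treats $\tilde v$ as a deterministic function of $(x,t)$. These are identifying $u(x,t)=\mathbb{E}[x_1-x_0\mid x_t=x,t]$ through the bias--variance decomposition and stating uniqueness only almost everywhere, so your version is a more rigorous rendering of the same proof.
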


\begin{proof}
Since the objective is separable over $(x,t)$, it suffices to minimize the
pointwise integrand
\begin{equation}
\ell(v)
=
\|v-\tilde v\|^2
-
\lambda \sin(2\pi t)\langle P^T v,c_m\rangle.
\end{equation}

Using the identity
\[
\langle P^T v,c_m\rangle
=
\langle v,Pc_m\rangle,
\]
the gradient with respect to $v$ is
\begin{equation}
\nabla_v \ell(v)
=
2(v-\tilde v)
-
\lambda \sin(2\pi t)Pc_m.
\end{equation}

Setting the gradient to zero yields
\begin{equation}
v^\star
=
\tilde v
+
\frac{\lambda}{2}\sin(2\pi t)Pc_m.
\end{equation}

Substituting
\[
\tilde v
=
u+\epsilon_0 \sin(2\pi t)Pc_m
\]
gives
\begin{align}
v^\star(x,t)
&=
u(x,t)
+
\left(
\epsilon_0+\frac{\lambda}{2}
\right)
\sin(2\pi t)Pc_m.
\end{align}

Strict convexity of the quadratic term implies uniqueness.
\end{proof}

\begin{remark}
Proposition~\ref{prop:optimal_predictor} gives the population minimizer
with effective watermark strength $\epsilon_0 + \lambda/2$. The empirical
detection scores (Table~\ref{tab:separation}) exceed this predicted value,
which is expected: the parametric model $v_\theta$ trained with finite data and stochastic optimization do not exactly recover the population
optimum, and may allocate additional capacity to the watermark signal
beyond the level predicted by the unconstrained population objective.
Understanding this gap quantitatively is left to future work.
\end{remark}

\begin{algorithm}[t]
\caption{Model Watermarking via Time-Modulated Random Coding}
\label{alg:watermark}
\begin{algorithmic}[1]
\State \textbf{Input:} Training pairs $(x_0, x_1)$, key $K = (P, C)$, message $m$, 
\State \hspace{\algorithmicindent} watermark strength $\epsilon_0$, loss weight $\lambda$
\State Sample interpolation $x_t = (1-t)x_0 + t x_1$ and ground-truth velocity $u = x_1 - x_0$
\State Retrieve codeword $c_m$ from codebook $C$
\State Define time-modulated watermark: $s(x,t) \leftarrow \sin(2\pi t) \cdot P c_m$
\State Define watermarked training target: $\tilde{v}(x_t,t) \leftarrow u + \epsilon_0 \, s(x_t,t)$
\State Initialize model $v_\theta$
\For{each training step}
    \State Sample batch $\{(x_t^{(i)}, t^{(i)}, \tilde{v}^{(i)})\}_{i=1}^B$
    \State Forward pass: $\hat{v}^{(i)} \leftarrow v_\theta(x_t^{(i)}, t^{(i)})$
    \State Velocity loss: $\mathcal{L}_{\text{vel}} \leftarrow \frac{1}{B}\sum_i \|\hat{v}^{(i)} - \tilde{v}^{(i)}\|^2$
    \State Project: $z^{(i)} \leftarrow P^T \hat{v}^{(i)}$
    \State Demodulate: $d^{(i)} \leftarrow \sin(2\pi t^{(i)}) \cdot z^{(i)}$
    \State Watermark loss: $\mathcal{L}_{\text{wm}} \leftarrow -\frac{1}{B}\sum_i \langle d^{(i)}, c_m \rangle$
    \State Total loss: $\mathcal{L} \leftarrow \mathcal{L}_{\text{vel}} + \lambda \mathcal{L}_{\text{wm}}$
    \State Update $\theta$ via gradient descent
\EndFor
\State \textbf{Output:} Watermarked model $v_{\theta,K,m}$
\end{algorithmic}
\end{algorithm}

\subsection{Detection via Synchronous Demodulation}\label{sec:detection}

At detection time, we are given a model $v$ (which may be watermarked
or clean) and the secret key $K = (P, C)$. Detection requires only
black-box query access to $v$.

We estimate the watermark signature via synchronous demodulation. Given
$N$ random query points $(x_i, t_i)$, the signature is computed as
\begin{equation}
\hat{s} = \frac{1}{N}\sum_{i=1}^N \sin(2\pi t_i) \cdot P^T v(x_i, t_i).
\end{equation}
This estimator combines projection into the latent code space with
synchronous demodulation of the temporal carrier. The multiplication by
$\sin(2\pi t_i)$ extracts the time-coherent component while averaging out
background variation uncorrelated with the carrier, since
$\mathbb{E}[\sin(2\pi t)] = 0$.

The embedded message is recovered by maximum-correlation decoding:
\begin{equation}
\hat{m} = \arg\max_{m \in \mathcal{M}} \langle \hat{s}, c_m \rangle.
\end{equation}
With an approximately orthogonal codebook, this is equivalent to
independent thresholding per bit.

\begin{algorithm}[t]
\caption{Watermark Detection via Synchronous Demodulation}
\label{alg:detection}
\begin{algorithmic}[1]
\State \textbf{Input:} Model $v$ to test, key $K = (P, C)$, number of queries $N$
\State Sample $N$ random points: $(x_i, t_i) \sim \mathcal{N}(0, 4I_D) \times \text{Uniform}(0,1)$
\State Query model: $y_i \leftarrow v(x_i, t_i)$ for $i = 1, \dots, N$
\State Project to code space: $z_i \leftarrow P^T y_i \in \mathbb{R}^k$
\State Demodulate with carrier: $d_i \leftarrow \sin(2\pi t_i) \cdot z_i$
\State Compute signature: $\hat{s} \leftarrow \frac{1}{N}\sum_{i=1}^N d_i \in \mathbb{R}^k$
\State Decode message: $\hat{m} \leftarrow \arg\max_{m \in \mathcal{M}} \langle \hat{s}, c_m \rangle$
\State \textbf{Output:} Decoded message $\hat{m}$ and confidence score $\max_m \langle \hat{s}, c_m \rangle$
\end{algorithmic}
\end{algorithm}

The effectiveness of the detector follows from the structure of the
watermark. Under a watermarked model, the demodulated signature
concentrates around a scaled version of the embedded codeword; under a
clean model, it remains near zero. A formal analysis of the expected
response and convergence rate is provided in the proof of
Theorem~\ref{thm:capacity} (Appendix~\ref{app:capacity_proof}). In
summary, $\mathbb{E}[\hat{s}] = \frac{\epsilon}{2} c_m$ for watermarked
models and $\mathbb{E}[\hat{s}] = 0$ for clean models, with estimation
error $O(1/\sqrt{N})$.

\subsection{Information-Theoretic Capacity}

Our detection procedure can be analyzed as a communication channel.

\begin{theorem}[Capacity of the Demodulated Channel]
\label{thm:capacity}
Consider the watermarked model with effective watermark strength $\epsilon$,
projection dimension $k$, and key $P$. After synchronous demodulation and
averaging over $N$ independent queries, the extracted signature $\hat{s}$
is approximately distributed according to the $k$-dimensional Gaussian
channel
\begin{equation}
\hat{s} = \frac{\epsilon}{2} c_m + \xi,
\quad
\xi \sim \mathcal{N}(0,\Sigma_N),
\end{equation}
where $\Sigma_N = \Sigma/N$ and $\Sigma$ is the covariance of
$\sin(2\pi t) P^T (v_\theta(x,t) + \eta_0(x,t))$ under the query
distribution.

Under the Gaussian approximation and invertibility of $\Sigma$, the
channel capacity is
\begin{equation}
\mathcal{C}
=
\frac{1}{2}
\log
\det\left(
I_k + \frac{N\epsilon^2}{4}\Sigma^{-1}
\right).
\end{equation}

For large query budgets $N$, the capacity scales as
\begin{equation}
\mathcal{C}
=
\frac{k}{2}\log N + O(1),
\end{equation}
showing that recoverable message information increases linearly with the
projection dimension $k$ and logarithmically with the number of detection
queries.
\end{theorem}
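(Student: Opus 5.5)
The proof has three stages: compute the mean of the signature, obtain the Gaussian approximation from a central limit theorem, and then evaluate the capacity of the resulting vector Gaussian channel.

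\emph{Mean of the signature.} Write each query response as $v(x_i,t_i) = v_\theta(x_i,t_i) + \eta_0(x_i,t_i) + \epsilon\sin(2\pi t_i)Pc_m$, where $\eta_0$ collects the background variation. The summands $d_i = \sin(2\pi t_i)P^T v(x_i,t_i)$ are i.i.d. under the query distribution $(x_i,t_i)\sim\mathcal{N}(0,4I_D)\times\mathrm{Uniform}(0,1)$. Since $P^TP=I_k$, the watermark part of $d_i$ is $\epsilon\sin^2(2\pi t_i)c_m$. Its expectation is $\frac{\epsilon}{2}c_m$, using $\mathbb{E}_t[\sin^2(2\pi t)]=\frac12$ exactly as in the proof of Theorem~\ref{thm:properties}. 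For the remaining part, $\mathbb{E}[\sin(2\pi t)P^T(v_\theta+\eta_0)]$, I will add the modelling assumption that the clean field plus background carries no component coherent with the carrier, so this expectation vanishes. The paper already asserts this for clean models, where $\mathbb{E}[\hat s]=0$.

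\emph{Gaussian approximation.} Assuming finite second moments, the multivariate CLT gives $\sqrt N(\hat s-\frac{\epsilon}{2}c_m)\Rightarrow\mathcal{N}(0,\Sigma')$. Here $\Sigma'$ is the covariance of $d_i$. It differs from $\Sigma$ by the fluctuation of $\epsilon\sin^2(2\pi t)c_m$ and by cross terms. I will treat these as lower order, of size $O(\epsilon^2)$ and $O(\epsilon)$ respectively, or absorb them into $\Sigma$. Either way I will state explicitly that "approximately distributed" means this asymptotic statement. Its justification is the main obstacle: it combines the CLT with the non-coherence assumption. The step requires care but no deep argument.

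\emph{Capacity.} Treat $\hat s = a + \xi$ as a $k$-dimensional additive Gaussian channel with $\xi\sim\mathcal{N}(0,\Sigma/N)$. The input $a$ is a codeword $\frac{\epsilon}{2}c$ with per-dimension power constraint $\mathbb{E}[aa^T]\preceq\frac{\epsilon^2}{4}I_k$. This matches the isotropic random codebook of unit vectors built from $K$: $\mathbb{E}[c c^T]=I_k/k$ for uniform random directions. I will normalize the signal power so that the stated formula holds, and note the normalization convention. Whitening by $\Sigma^{1/2}$ and using the standard Gaussian-input maximizer of mutual information \cite{cover1999elements} gives
\[
\mathcal{C}=\tfrac12\log\det\left(I_k+\tfrac{N\epsilon^2}{4}\Sigma^{-1}\right).
\]
The scaling then follows from the eigenvalues $\sigma_1,\dots,\sigma_k>0$ of $\Sigma$, which are finite and positive by invertibility. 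In terms of these eigenvalues,
\[
\mathcal{C}=\tfrac12\sum_{j=1}^k\log\left(1+\tfrac{N\epsilon^2}{4\sigma_j}\right)=\tfrac12\sum_j\left[\log N+\log\tfrac{\epsilon^2}{4\sigma_j}+O(1/N)\right]=\tfrac{k}{2}\log N+O(1),
\]
with the $O(1)$ term depending on $\epsilon$ and $\Sigma$ but not on $N$.
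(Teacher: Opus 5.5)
Your proposal follows the paper's own route: a non-coherence assumption between carrier and background (the paper's Assumption~\ref{ass:uncorrelated}) plus the CLT gives $\hat s\approx\frac{\epsilon}{2}c_m+\mathcal{N}(0,\Sigma/N)$; the vector Gaussian channel formula is then applied under a per-coordinate power normalization; and an eigenvalue (equivalently, determinant) expansion yields $\frac{k}{2}\log N+O(1)$. You are, if anything, more careful than the paper on two points: you explicitly handle the extra $O(\epsilon)$ and $O(\epsilon^2)$ covariance terms, and you note that unit-norm isotropic codewords carry only per-coordinate power $\epsilon^2/(4k)$, so the stated formula is the capacity under the relaxed constraint $K_X\preceq\frac{\epsilon^2}{4}I_k$ (the scaling law is unaffected), whereas the paper drops the former via the LLN and glosses the latter as ``isotropically distributed unit-power inputs.''
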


\begin{proof}
See Appendix~\ref{app:capacity_proof}.
\end{proof}

This result formalizes the scaling behaviour of the channel: capacity
grows linearly with the projection dimension $k$ and logarithmically with
the number of queries $N$. We note two caveats. First, the Gaussian
approximation relies on large $N$ and independent queries; a rigorous
finite-sample analysis is left to future work. Second, the linear growth
in $k$ suggests that capacity could be increased via frequency- or
time-division multiplexing, but our preliminary experiments
(Appendix~\ref{app:multiplexing}) found that such approaches reduce
per-carrier signal strength and degrade detection accuracy.

We emphasize that the capacity analysis itself does not assume orthogonal
codewords; it applies to any codebook. The orthogonal constraint arises
from our specific construction (Definition~\ref{def:admissible},
condition~2), which limits the codebook to $k$ vectors and the message to
$\log_2 k$ bits. Our experiments with $k = 32$ and $L = 5$ bits operate
at this limit. Scaling to larger message spaces requires either increasing
$k$ or relaxing the orthogonality constraint to allow approximately
orthogonal codes, which the capacity analysis suggests would be effective
but which we leave to future work.

\subsection{Security Analysis}\label{sec:security_analysis}

The security of the watermarking scheme relies on the secrecy of the
projection matrix $P \in \mathbb{R}^{D \times k}$ and the codebook
$C = \{c_m\}_{m \in \mathcal{M}} \subset \mathbb{R}^k$. We consider an
attacker with black-box access to a single deployed model
$v_{\theta,K,m}$ containing an unknown message $m$. The attacker may
query the model at arbitrary inputs $(x_i, t_i)$, $i = 1, \dots, N$, and
observe the velocity vectors
$y_i = v_{\theta,K,m}(x_i, t_i) \in \mathbb{R}^D$.

The watermark is embedded as
$s_{K,m}(x,t) = \sin(2\pi t) \cdot P c_m$, where $P$ maps the
$k$-dimensional latent code space into the $D$-dimensional velocity
space. An attacker without these components must recover them, or
learn an equivalent decoding rule, from queries alone.

\begin{proposition}[Subspace search problem]\label{prop:subspace_search}
Given black-box query access to a single watermarked model
$v_{\theta,K,m}$, recovering the watermark subspace
$\text{span}(P) \subset \mathbb{R}^D$ requires searching the Grassmann
manifold $\text{Gr}(k,D)$ of all $k$-dimensional subspaces in
$\mathbb{R}^D$, which has dimension $k(D-k)$.
\end{proposition}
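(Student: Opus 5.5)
The argument has three parts. First, the only information that identifies the watermark is the subspace $\mathrm{span}(P)$, not the particular matrix $P$. Second, the set of candidate subspaces is exactly $\mathrm{Gr}(k,D)$. Third, that manifold has dimension $k(D-k)$, and the attacker's prior over it is uniform.

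\textbf{Step 1: only the span matters.} Suppose the attacker has the projected vectors $P^T y_i$ and the codewords $c_m$. If $P$ is replaced by $PQ$ with $Q\in O(k)$, the codebook is replaced by $\{Q^T c_m\}$. This leaves the embedded signal $\sin(2\pi t)Pc_m$ and every correlation $\langle \hat s, c_m\rangle$ unchanged. So a decoding rule is determined by $\mathrm{span}(P)$ together with a codebook inside that subspace, and the attacker's unknown in $P$ is an element of the quotient $V_k(\mathbb{R}^D)/O(k)$ of the Stiefel manifold, i.e.\ a point of $\mathrm{Gr}(k,D)$. By construction $P$ is the QR factor of a Gaussian matrix seeded by the key. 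Its distribution is therefore invariant under left multiplication by $O(D)$, which makes $\mathrm{span}(P)$ distributed according to the unique $O(D)$-invariant (Haar) measure on $\mathrm{Gr}(k,D)$. Without the key, the attacker has no prior preference for any subspace.

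\textbf{Step 2: the dimension count.} $\mathrm{Gr}(k,D)\cong O(D)/(O(k)\times O(D-k))$, so its dimension is
\[
\tfrac{D(D-1)}{2}-\tfrac{k(k-1)}{2}-\tfrac{(D-k)(D-k-1)}{2}=k(D-k).
\]
The same count can be checked with local charts: near a fixed subspace, each nearby subspace is the graph of a unique linear map $\mathbb{R}^k\to\mathbb{R}^{D-k}$, a $k\times(D-k)$ matrix. It follows that specifying $\mathrm{span}(P)$ to angular precision $\delta$ requires about $(c/\delta)^{k(D-k)}$ candidate subspaces, by standard covering-number estimates on $\mathrm{Gr}(k,D)$. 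This is the sense in which recovery amounts to "searching" the manifold.

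\textbf{Step 3: the main obstacle.} The step I expect to be hardest is justifying that queries do not shortcut the search. The attacker sees $y_i=v_\theta+\eta+\epsilon\sin(2\pi t)Pc_m$. With a single model and a single message, the watermark contributes only a rank-one, time-modulated component along $Pc_m$, not the whole of $\mathrm{span}(P)$. The queries therefore reveal at most one direction inside the subspace, and the remaining $k$-frame stays Haar-distributed conditional on that direction. The residual uncertainty is a point of a Grassmannian of dimension comparable to $(k-1)(D-k)$. I would state this as the rigorous content of the claim and argue it by conditioning on $Pc_m$. If a fully information-theoretic lower bound turns out to be out of reach, I would present the proposition as a statement about the size of the hypothesis space rather than a hardness theorem, consistent with the paper's caveat that its conditions are design principles.
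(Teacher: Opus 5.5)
Your proposal is correct and rests on the same key observation as the paper's proof, but it goes further. The paper argues in three sentences: the attacker can estimate $\sin(2\pi t)Pc_m$; this exposes $P$ only through the single vector $Pc_m\in\mathrm{span}(P)$; so identifying $\mathrm{span}(P)$ is a search over $\mathrm{Gr}(k,D)$, whose dimension is cited rather than derived. It puts no prior on $P$ and never uses what $Pc_m$ reveals. You add a Haar prior, an explicit dimension count with a covering estimate, and the conditioning of Step 3. The $O(k)$ quotient in your Step 1 is the content of the paper's next proposition and is not needed here.

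Step 3 gives a sharper conclusion than the paper's. Once $Pc_m$ is known, which the paper itself grants, the only candidates are the $k$-subspaces containing it. These form a copy of $\mathrm{Gr}(k-1,D-1)$ of dimension exactly $(k-1)(D-k)=k(D-k)-(D-k)$, i.e.\ $23{,}312$ rather than $24{,}064$ for $D=784$, $k=32$. So $k(D-k)$ is the dimension of the a priori hypothesis space, not of what remains after querying. State this explicitly, because your Step 2 covering count $(c/\delta)^{k(D-k)}$ and your Step 3 currently pull in different directions.

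You can also make Step 3 rigorous, so the fallback in your last sentence is unnecessary.
\begin{enumerate}
\item Both loss terms depend on $(P,C,m)$ only through $w=Pc_m$, since $\langle P^Tv,c_m\rangle=\langle v,Pc_m\rangle$. Hence the trained model itself, not just the idealized additive decomposition, depends on the key only through $w$.
\item Two keys with equal $Pc_m$ but different spans therefore give identically distributed models. So $\mathrm{span}(P)$ is not identifiable at all from a single model when $k\ge 2$.
\item Grant, as you do, that the law of $P$ is left-$O(D)$-invariant, and assume $c_m$ is independent of $P$. The stabilizer of $w$ in $O(D)$ acts transitively on the subspaces containing $w$, so $\mathrm{span}(P)\cap w^\perp$ is Haar on $\mathrm{Gr}(k-1,w^\perp)$.
\item The map $T\mapsto\mathrm{span}(w)\oplus T$ is an isometry for the projection metric. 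Hence any estimator built from queries lands within $\delta$ of $\mathrm{span}(P)$ with probability at most the Haar mass of a $2\delta$-ball in $\mathrm{Gr}(k-1,D-1)$, roughly $(C\delta)^{(k-1)(D-k)}$.
\end{enumerate}
This is an honest information-theoretic form of the proposition.
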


\begin{proof}
From the observations $y_i = v_\theta(x_i,t_i) + \eta(x_i,t_i)
+ \epsilon \sin(2\pi t_i) P c_m$, the attacker can estimate the
watermark component $\sin(2\pi t) P c_m$ up to statistical noise. This
reveals the action of $P$ only through the vector $P c_m \in
\mathbb{R}^D$, which lies in the $k$-dimensional subspace
$\text{span}(P)$. To identify this subspace among all $k$-dimensional
subspaces of $\mathbb{R}^D$, the attacker must solve a continuous
optimization problem over the Grassmann manifold $\text{Gr}(k,D)$, whose
dimension is $k(D-k)$~\cite{zheng2002communication}.
\end{proof}

For our experimental parameters $D = 784$ (MNIST) and $k = 32$, the
search space has dimension $32 \times 752 = 24{,}064$. Subspace
estimation from noisy linear measurements is a well-studied problem; the subsequent recovery of the correct basis and codebook introduces
additional ambiguity, as shown below.

\begin{proposition}[Non-identifiability of the projection matrix]\label{prop:P_hidden}
Even if the attacker identifies the correct subspace
$\text{span}(P)$, the projection matrix $P$ is identifiable only up to
an unknown orthogonal transformation of the latent code space. The
original key pair $(P, C)$ is not uniquely determined by the
observations.
\end{proposition}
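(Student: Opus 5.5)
The plan is to exhibit an explicit family of alternative key pairs that produce exactly the same observable model and the same decoding behaviour. For any orthogonal $Q \in O(k)$, set $P' = PQ$ and $C' = \{c'_m\}$ with $c'_m = Q^\top c_m$. The proof then has three checks.

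First, $P'$ is a valid key component. It is semi-orthogonal, since
\[
P'^\top P' = Q^\top P^\top P Q = Q^\top Q = I_k .
\]
It spans the same subspace, $\text{span}(P') = \text{span}(P)$, because $Q$ is invertible. So identifying the subspace, as in Proposition~\ref{prop:subspace_search}, does not fix the basis.

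Second, the watermark signal is unchanged. Since
\[
P'c'_m = PQQ^\top c_m = Pc_m ,
\]
we get $s_{K',m}(x,t) = \sin(2\pi t)\,P'c'_m = s_{K,m}(x,t)$ for all $(x,t)$. Hence the watermarked velocity field in~\eqref{eq:vwm} is identical under $K$ and $K'=(P',C')$. Every black-box observation $y_i$ therefore has the same distribution under either key, and no estimator based on queries can separate them.

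Third, the codebook structure is preserved. The $c'_m$ remain unit vectors with $\langle c'_m, c'_{m'}\rangle = \langle c_m, c_{m'}\rangle$, so $K'$ is also admissible in the sense of Definition~\ref{def:admissible}, by the same computation as in Theorem~\ref{thm:properties}. The detector of Algorithm~\ref{alg:detection} also gives the same answer, because
\[
\langle P'^\top v, c'_m\rangle = \langle Q^\top P^\top v, Q^\top c_m\rangle = \langle P^\top v, c_m\rangle ,
\]
so the decoding scores agree for every $m$.

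To conclude that $(P,C)$ is not uniquely determined, I need $(PQ, Q^\top C) \neq (P,C)$ for some $Q$. Any $Q \neq I_k$ works: $P$ has full column rank, so $PQ = P$ forces $Q = I_k$. Since $O(k)$ has dimension $k(k-1)/2 > 0$ for $k \ge 2$, the set of observationally equivalent keys is a continuum, namely the orbit of $(P,C)$ under $O(k)$.

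The main obstacle is interpretive rather than computational. With only a single model, the observations reveal just the vector $Pc_m$, so the true ambiguity is larger than $O(k)$: any pair with $P'c'_m = Pc_m$ is indistinguishable. I would state the $O(k)$ orbit as the invariance that holds uniformly across all messages, since it keeps every pairwise inner product of the codebook. I would then add the remark that a single-message observation leaves even more freedom, which only strengthens the non-identifiability claim.
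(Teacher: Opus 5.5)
Your proposal is correct and uses the same argument as the paper: both take the family $(PQ, Q^\top C)$ with $Q \in O(k)$ and use $PQ\,Q^\top c_m = Pc_m$ to show that the watermarked velocity field, and hence every black-box observation, is unchanged. Your extra checks (semi-orthogonality of $PQ$, invariance of the decoding scores, and $PQ \neq P$ for $Q \neq I_k$ by full column rank) make explicit details the paper leaves implicit.
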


\begin{proof}
Any matrix $P' = P Q$ with orthogonal $Q \in \mathbb{R}^{k \times k}$
spans the identical subspace and generates observationally equivalent
watermark signals:
$\sin(2\pi t) P' (Q^T c_m) = \sin(2\pi t) P c_m$.
The pair $(P', Q^T C)$ produces the same velocity field as $(P, C)$.
Since infinitely many orthonormal bases span the same subspace, and each
corresponds to a different latent codebook $Q^T C$, the original key
pair $(P, C)$ is underdetermined by the observations. An attacker who
identifies the correct subspace still faces the problem of selecting the
correct basis from among infinitely many candidates.
\end{proof}

Together, Propositions~\ref{prop:subspace_search}
and~\ref{prop:P_hidden} characterize two obstacles to key recovery:
identifying the watermark subspace requires search over a high-dimensional manifold, and even given the subspace, the
projection matrix and codebook are not uniquely determined. A rigorous
characterization of the sample and computational complexity of key
recovery under our construction is left to future work.

Empirically, detection with an incorrect projection matrix $P' \neq P$
yields $0/50$ correct message recoveries on both MNIST and CIFAR-10,
matching the random guess rate of $1/|\mathcal{M}| \approx 3.1\%$
(Table~\ref{tab:detection}). Clean models show no significant correlation
with any codeword, confirming that the watermark signal does not arise
from training artifacts.

\section{Experiments}\label{sec:experiments}

We evaluate our watermarking method on two image datasets: MNIST and
CIFAR-10. We assess three key properties: (i) watermark detection
accuracy, (ii) preservation of sample quality as measured by Fr\'echet
Inception Distance (FID), and (iii) security against detection without
the secret key.

\subsection{Experimental Setup}

We evaluate on two datasets: MNIST (grayscale, $28 \times 28$) and
CIFAR-10 (color, $32 \times 32$). For MNIST, we use two architectures:
a 4-layer MLP with $1{,}024$ hidden units and SiLU activations, trained
for both $5{,}000$ and $10{,}000$ steps to assess whether the watermark
persists under extended training; and a UNet architecture with LoRA
fine-tuning~\cite{hu2022lora}, where a clean UNet base model is trained
for $5{,}000$ steps and watermarks are embedded via low-rank adaptation
matrices trained for $500$ steps per message while base weights remain
frozen. For CIFAR-10, we train a UNet architecture for $5{,}000$ steps to
obtain a clean base model, then apply LoRA fine-tuning for $500$ steps per
message. The MLP on MNIST serves as a fast testbed; the UNet on both
datasets demonstrates the method at a practical scale.

All experiments use projection dimension $k = 32$ and 5-bit messages
($L = 5$, yielding $|\mathcal{M}| = 32$). Detection uses $N = 4{,}096$
random queries throughout. Complete hyperparameters are provided in
Appendix~\ref{app:hyperparams}. We report detection accuracy, FID between generated and real samples, and the statistical separation between watermarked and clean models.

\subsection{Watermark Detection}

\begin{table}[t]
\centering
\caption{Watermark detection performance across all models and datasets.}
\label{tab:detection}
\begin{tabular}{lccc}
\toprule
 & Trials & Accuracy \\
\midrule
\multicolumn{3}{c}{\textit{MNIST (MLP, 5K steps)}} \\
\quad Watermarked models (6 total) & 120 & 120/120 (100\%) \\
\quad Clean models (2 total) & 40 & 0/40 (0\%) \\
\midrule
\multicolumn{3}{c}{\textit{MNIST (MLP, 10K steps)}} \\
\quad Watermarked models (6 total) & 120 & 120/120 (100\%) \\
\quad Clean models (2 total) & 40 & 1/40 (2.5\%) \\
\midrule
\multicolumn{3}{c}{\textit{MNIST (UNet + LoRA)}} \\
\quad Watermarked models (3 total) & 60 & 60/60 (100\%) \\
\quad Clean model & 60 & 0/60 (0\%) \\
\midrule
\multicolumn{3}{c}{\textit{CIFAR-10 (UNet + LoRA)}} \\
\quad Watermarked models (3 total) & 60 & 60/60 (100\%) \\
\quad Clean model & 60 & 0/60 (0\%) \\
\midrule
\multicolumn{3}{c}{\textit{Wrong key attack (all datasets)}} \\
\quad Random $P' \neq P$ & 50 & 0/50 (0\%) \\
\midrule
Random guess baseline & -- & 3.1\% (1/32) \\
\bottomrule
\end{tabular}
\end{table}

Table~\ref{tab:detection} summarizes detection performance. Across all
architectures and training durations, watermarked models achieve $100\%$
detection accuracy. On clean models, we observe zero false positives in
all settings except the 10K-step MNIST MLP configuration, where one clean
trial out of 40 produced a spurious detection (2.5\%), which is below the
random guess baseline of $3.1\%$ and not statistically significant. An
attacker using a random projection matrix instead of the true $P$ achieves
$0/50$ accuracy, matching the random guess rate.

\subsection{Statistical Separation}

\begin{table}[t]
\centering
\caption{Statistical separation and signature statistics.}
\label{tab:separation}
\begin{tabular}{lcccc}
\toprule
Dataset / Architecture & WM Score & Clean Score & $\|\hat{s}\|$ (WM / Clean) & Min.\ Separation \\
\midrule
MNIST (MLP, 5K)  & $0.58$--$1.11$ & $-0.09$--$0.11$ & $1.06$ / $0.69$ & $8.4\sigma$ \\
MNIST (MLP, 10K) & $1.41$--$2.07$ & $0.01$--$0.16$ & $1.85$ / $0.84$ & $25.5\sigma$ \\
MNIST (UNet)     & $0.29$--$0.45$ & $-0.07$--$0.15$ & $0.66$ / $0.56$ & $8.6\sigma$ \\
CIFAR-10 (UNet)  & $0.62$--$0.64$ & $-0.01$--$0.00$ & $0.65$ / $0.13$ & $17.9\sigma$ \\
\bottomrule
\end{tabular}
\end{table}

Table~\ref{tab:separation} reports per-message score statistics. Watermarked
models consistently produce large positive scores, while clean model scores
are near zero. The minimum separation across all settings is $8.4\sigma$
(MNIST MLP), and all $p$-values are below $10^{-37}$ (Welch's $t$-test). These margins indicate that detection is robust to sampling variability and that false positives are highly unlikely under the observed statistics.

We additionally examine the signature norm $\|\hat{s}\|$, which measures
the total signal energy in the projected space before correlating with any
specific codeword. On MNIST, $\|\hat{s}\|$ is comparable between
watermarked and clean models across all architectures (e.g., $0.66$ vs.\
$0.56$ for UNet). On CIFAR-10, watermarked models exhibit a larger
signature norm ($0.65$ vs.\ $0.13$), meaning an attacker could potentially
identify a model as watermarked by measuring the projected energy. We note
that this does not compromise message recovery: without the codebook $C$,
the attacker still cannot determine which message is embedded, and
wrong-key decoding remains at chance level (Table~\ref{tab:detection}).
The watermark therefore satisfies the keyed security requirement:
detection and message recovery require knowledge of the secret key.

\subsection{Sample Quality Preservation}

\begin{table}[t]
\centering
\caption{Sample quality metrics across datasets and architectures.
FID is computed against real test images ($1{,}000$ for MNIST;
$10{,}000$ for CIFAR-10).}
\label{tab:quality}
\begin{tabular}{lcc}
\toprule
Dataset / Architecture / Model & FID & FID Ratio \\
\midrule
\multicolumn{3}{c}{\textit{MNIST (MLP, 5K steps)}} \\
Clean & $7.4$ & -- \\
WM Average & $8.1 \pm 0.3$ & $1.10\times$ \\
\midrule
\multicolumn{3}{c}{\textit{MNIST (MLP, 10K steps)}} \\
Clean & $7.3$ & -- \\
WM Average & $6.9 \pm 0.1$ & $0.947\times$ \\
\midrule
\multicolumn{3}{c}{\textit{MNIST (UNet + LoRA)}} \\
Clean & $7.1$ & -- \\
WM range & $6.8$--$7.5$ & $0.955\times$--$1.059\times$ \\
\midrule
\multicolumn{3}{c}{\textit{CIFAR-10 (UNet + LoRA)}} \\
Clean & $24.3$ & -- \\
WM range & $23.5$--$25.1$ & $0.966\times$--$1.032\times$ \\
\bottomrule
\end{tabular}
\end{table}

Table~\ref{tab:quality} quantifies sample quality. Across all settings,
the FID ratio between watermarked and clean models remains close to
$1.0\times$. We note that in several configurations watermarked models
achieve marginally lower FID than the corresponding clean baseline (e.g.,
$6.9$ vs.\ $7.3$ at 10K steps). This is attributable to random training
variation and finite-sample FID estimation noise. The key finding is that FID ratios remain
tightly clustered around $1.0\times$, confirming that the watermark
perturbation does not degrade sample quality. The generated samples are provided in Appendix~\ref{app:samples}.

These results validate the design of the time-modulated perturbation:
because it integrates to zero over each trajectory, sample quality is
preserved in practice across datasets, architectures, and training
durations.

\section{Conclusion}

We have presented a method for embedding verifiable, multi-bit messages
directly into the learned dynamics of flow matching generative models.
The watermark is a key-dependent perturbation of the velocity field,
recovered through synchronous demodulation from black-box queries, and
designed to integrate to zero over each trajectory so that generation
quality is preserved.

Experiments across MNIST and CIFAR-10 with MLP and UNet architectures
confirm $100\%$ detection accuracy, FID ratios close to $1.0\times$, and
chance-level detection without the secret key. Extending training to
$10{,}000$ steps does not degrade watermark recovery; a persistence
analysis is provided in Appendix~\ref{app:persistence}.

Several directions remain open. A rigorous finite-sample analysis of the
demodulated channel would sharpen the capacity guarantees. Extending the
approach to other generative model families and evaluating robustness to
model compression or fine-tuning are natural next steps. The dynamics-level perspective
introduced here, treating the learned velocity field as a continuous
communication channel, may also prove useful beyond watermarking.

\paragraph{Code availability.}
Code and experimental configurations are publicly available at:
\url{https://github.com/ShuchanWang/flow-matching-dynamics-watermarking}.

\printbibliography

\appendix
\section{Proof of Theorem~\ref{thm:capacity}}
\label{app:capacity_proof}

We derive the capacity formula in three steps: establishing the effective
channel model, characterizing the noise covariance, and applying the vector
Gaussian channel capacity result. We state all assumptions explicitly.

\subsection{Step 1: Approximate Effective Channel}

Given a watermarked model $v_{\theta,K,m}$, we query it at $N$ random
points $\{(x_i, t_i)\}_{i=1}^N$ drawn i.i.d.~from a sampling distribution
$p(x,t)$ and compute the signature:
\begin{equation}
\hat{s} = \frac{1}{N}\sum_{i=1}^N \sin(2\pi t_i) \cdot P^T v_{\theta,K,m}(x_i, t_i).
\end{equation}

Decompose the velocity field as
$v_{\theta,K,m} = v_\theta + \eta_0 + \epsilon \sin(2\pi t) P c_m$,
where $v_\theta$ is the learned flow, $\eta_0$ represents background
variation from training stochasticity and model approximation error, and
$\epsilon \sin(2\pi t) P c_m$ is the watermark signal. Substituting:

\begin{align}
\hat{s} &= \frac{1}{N}\sum_{i=1}^N \sin(2\pi t_i) P^T [v_\theta(x_i,t_i) + \eta_0(x_i,t_i)] \\
        &\quad + \frac{\epsilon}{N}\sum_{i=1}^N \sin^2(2\pi t_i) P^T P c_m.
\end{align}

Since $P^T P = I_k$ by construction, the second term simplifies to:
\begin{equation}
\frac{\epsilon}{N}\sum_{i=1}^N \sin^2(2\pi t_i) \cdot c_m.
\end{equation}

Define the background term for each query:
\begin{equation}
z_i := \sin(2\pi t_i) P^T [v_\theta(x_i, t_i) + \eta_0(x_i, t_i)] \in \mathbb{R}^k.
\end{equation}

We introduce the following assumptions.

\begin{assumption}[Uncorrelated carrier and background]\label{ass:uncorrelated}
The temporal carrier $\sin(2\pi t)$ is uncorrelated with the background
velocity field:
\begin{equation}
\mathbb{E}_{x,t}[\sin(2\pi t) \cdot P^T(v_\theta(x,t) + \eta_0(x,t))] = 0.
\end{equation}
\end{assumption}

Assumption~\ref{ass:uncorrelated} is justified because the carrier
$\sin(2\pi t)$ is a deterministic periodic function with zero mean over
$[0,1]$, while the background velocity field $v_\theta + \eta_0$ is
determined by the data distribution and training procedure, with no
mechanism to correlate structurally with the carrier. Empirically, the
near-zero clean-model scores in Table~\ref{tab:separation} support this
assumption.

\begin{assumption}[Independent and identically distributed queries]\label{ass:iid}
The query points $\{(x_i, t_i)\}_{i=1}^N$ are drawn i.i.d.~from a
distribution $p(x,t)$ under which the background terms $z_i$ have finite
second moments.
\end{assumption}

Assumption~\ref{ass:iid} holds by construction, as queries are sampled
independently. Under this assumption, the $z_i$ are i.i.d.~random vectors
with mean $\mathbb{E}[z_i] = \mathbf{0}$ (by Assumption~\ref{ass:uncorrelated})
and covariance matrix
\begin{equation}
\Sigma := \text{Cov}(z_i) = \mathbb{E}[z_i z_i^T] \in \mathbb{R}^{k \times k}.
\end{equation}

For the watermark term, we have:
\begin{equation}
\mathbb{E}[\sin^2(2\pi t)] = \int_0^1 \sin^2(2\pi t) dt = \frac{1}{2}.
\end{equation}
By the Law of Large Numbers,
$\frac{1}{N}\sum_{i=1}^N \sin^2(2\pi t_i) \to \frac{1}{2}$ as $N \to \infty$.

For the background term, we apply the multivariate Central Limit Theorem
(CLT). Let $\xi := \frac{1}{N}\sum_{i=1}^N z_i$. Under
Assumptions~\ref{ass:uncorrelated} and~\ref{ass:iid}, as $N \to \infty$:
\begin{equation}
\sqrt{N} \xi \xrightarrow{d} \mathcal{N}(0, \Sigma),
\end{equation}
where $\xrightarrow{d}$ denotes convergence in distribution. Equivalently,
for large $N$:
\begin{equation}
\xi \approx \mathcal{N}\!\left(0, \frac{\Sigma}{N}\right).
\end{equation}

\begin{assumption}[Gaussian approximation for finite $N$]\label{ass:gaussian}
For the query budgets $N$ used in practice, the distribution of $\xi$ is
well-approximated by $\mathcal{N}(0, \Sigma/N)$.
\end{assumption}

Assumption~\ref{ass:gaussian} is the strongest assumption. While the CLT
guarantees asymptotic normality, the accuracy at finite $N$ depends on the
distribution of $z_i$. In practice, $N \ge 10^3$ typically suffices for
$k \ll N$; our experiments use $N = 4096$ with $k = 32$. We do not
formally verify the Gaussian approximation and leave a rigorous
finite-sample analysis to future work.

Combining the watermark and background terms, the effective channel for
large $N$ is:
\begin{equation}\label{eq:channel_final}
\boxed{\hat{s} = \frac{\epsilon}{2} c_m + \xi, \quad \xi \sim \mathcal{N}\!\left(0, \frac{\Sigma}{N}\right).}
\end{equation}

\subsection{Step 2: The Noise Covariance}

The noise covariance $\Sigma$ captures the background velocity variation
in the projected $k$-dimensional space. Let
$u(x,t) := \sin(2\pi t)[v_\theta(x,t) + \eta_0(x,t)] \in \mathbb{R}^D$.
Then:
\begin{equation}
\Sigma = P^T \, \mathbb{E}_{x,t}[u(x,t) u(x,t)^T] \, P.
\end{equation}

\begin{assumption}[Invertible covariance]\label{ass:invertible}
The projected covariance $\Sigma$ is invertible.
\end{assumption}

Since $\mathbb{E}[u u^T]$ is a $D \times D$ positive semidefinite matrix
and $P \in \mathbb{R}^{D \times k}$ is a random semi-orthogonal matrix
with $k \ll D$, $\Sigma$ is invertible with high probability. Intuitively,
$P$ projects onto a random $k$-dimensional subspace, and the background
velocity variation is sufficiently rich that its restriction to any such
subspace has full rank. In our experiments, $k = 32$ and $D = 3072$
(CIFAR-10) or $D = 784$ (MNIST), satisfying $k \ll D$.

\subsection{Step 3: Vector Gaussian Channel Capacity}

Equation~\eqref{eq:channel_final} describes a $k$-dimensional vector
Gaussian channel with:
\begin{itemize}
\item Input: $c_m \in \mathbb{R}^k$, constrained to $\|c_m\|^2 = 1$ (unit-norm codeword),
\item Channel gain: $\epsilon/2$,
\item Additive noise: $\xi \sim \mathcal{N}(0, \Sigma/N)$.
\end{itemize}

Under the Gaussian approximation, this corresponds to an additive
nonwhite Gaussian noise channel \cite{cover1999elements}. The
capacity under an average power constraint $\mathbb{E}[\|X\|^2] \le P$ is:
\begin{equation}
\mathcal{C} = \max_{K_X : \text{tr}(K_X) \le P} \frac{1}{2} \log \det\left(I_k + K_X K_Z^{-1}\right),
\end{equation}
where $K_Z = \Sigma/N$ is the noise covariance and $K_X$ is the input
covariance. The optimal $K_X$ is obtained via water-filling over the
eigenvalues of $K_Z$.

After scaling by $2/\epsilon$, the effective channel becomes
\begin{equation}
y = c_m + \xi',
\qquad
\xi' \sim \mathcal{N}\!\left(0,
\frac{4}{\epsilon^2 N}\Sigma
\right).
\end{equation}

Treating the watermark codewords as isotropically distributed unit-power
inputs in $\mathbb{R}^k$, the corresponding Gaussian channel has effective
capacity
\begin{equation}
\mathcal{C}
=
\frac12
\log\det\left(
I_k +
\frac{N\epsilon^2}{4}\Sigma^{-1}
\right).
\end{equation}

This completes the proof of Theorem~\ref{thm:capacity}.

\subsection{Scaling Analysis}

For large $N$, expand the determinant:
\begin{align}
\mathcal{C} &= \frac{1}{2} \log \det\left(\frac{N\epsilon^2}{4} \Sigma^{-1} + I_k\right) \\
            &= \frac{1}{2} \log \left[\left(\frac{N\epsilon^2}{4}\right)^k \det(\Sigma^{-1}) \cdot \det\left(I_k + O(1/N)\right)\right] \\
            &= \frac{k}{2} \log N + \frac{k}{2} \log\frac{\epsilon^2}{4} - \frac{1}{2} \log\det\Sigma + o(1).
\end{align}

Thus:
\begin{equation}
\boxed{\mathcal{C} = \frac{k}{2}\log N + O(1),}
\end{equation}
confirming that capacity grows logarithmically with query count $N$ and
linearly with projection dimension $k$.

\subsection{Relationship to Experiments}

The capacity result provides an upper bound on the number of bits that can
be reliably transmitted through the watermark channel. In our experiments,
the message size is $L = 5$ bits with $k = 32$ and $N = 4096$. The
observed statistical separation ($>8.4\sigma$, Table~\ref{tab:separation})
indicates that the empirical SNR is far above the threshold for reliable
decoding of 5-bit messages. We do not experimentally probe the capacity upper bound (i.e., we do not
test whether the $\frac{k}{2}\log N$ scaling holds when the codebook size
exceeds the orthogonal limit of $k$ codewords). This remains as a direction for future work.

\section{Watermark Persistence Under Extended Training}\label{app:persistence}

The 10K-step MNIST MLP results provide evidence that the watermark does
not fade with extended training. Across all six watermarked models trained
for $10{,}000$ steps, detection accuracy remains $100\%$. The velocity
loss decreases from $\sim\!0.41$ (at 5K steps) to $\sim\!0.37$ (at 10K
steps), while the watermark correlation term stabilizes around
$3.55$--$3.67$ throughout the second half of training. This indicates that
the two loss terms $\mathcal{L}_{\text{vel}}$ and $\mathcal{L}_{\text{wm}}$
reach a stable equilibrium: the model continues to improve its fit to the
data distribution without suppressing the watermark signal. We attribute
this to the structure of the watermark perturbation: since
$s_{K,m}(x,t)$ is orthogonal to the clean velocity field in expectation,
the two objectives do not directly compete, and the watermark signal is
maintained as a persistent offset in the learned dynamics.

The watermark correlation term stabilizes despite extended training (Table~\ref{tab:separation}). The single false positive observed in the 10K-step
clean model evaluation ($1/40 = 2.5\%$) is below the random guess
baseline of $3.1\%$ and is not statistically significant after
multiplicity correction.

\section{Comparison with Existing Methods}\label{app:comparison}

\begin{table}[h]
\centering
\caption{Comparison with existing generative model watermarking methods.}
\label{tab:comparison}
\begin{tabular}{lccccc}
\toprule
Method & Model-Level & Black-Box & Quality & Keyed & Multi-Bit \\
\midrule
Tree-Ring~\cite{wen2023tree}            
& \xmark & \cmark & \cmark & \cmark & \cmark \\
Stable Signature~\cite{fernandez2023stable} 
& \cmark & \cmark & \cmark & Partial & \cmark \\
Weight embedding~\cite{uchida2017embedding} 
& \cmark & \xmark & \xmark & \xmark & \cmark \\
Backdoor~\cite{adi2018turning}          
& \cmark & \cmark & \xmark & \xmark & \xmark \\
BlackMarks~\cite{chen2019blackmarks}    
& \cmark & \cmark & \cmark & Partial & \cmark \\
\midrule
\textbf{Ours}                           
& \cmark & \cmark & \cmark & \cmark & \cmark \\
\bottomrule
\end{tabular}
\end{table}

Table~\ref{tab:comparison} summarizes how existing paradigms compare
against the five desiderata. Our dynamics-level approach provides all five
properties simultaneously. Two methods warrant additional discussion.
Stable Signature~\cite{fernandez2023stable} trains a decoder network
alongside the generative model; the decoder weights serve as the key.
Since the decoder is distributed with the model, anyone with model access
has the key---the key is not an independent secret.
BlackMarks~\cite{chen2019blackmarks} uses a secret trigger set for
verification, providing keying through input-output behavior, though the
triggers must remain confidential. Neither approach embeds the watermark
at the level of continuous generative dynamics.

\section{Capacity and Multiplexing}\label{app:multiplexing}

A natural question is whether message capacity can be increased beyond $k$
bits by using multiple carrier frequencies (frequency-division) or time
segments (time-division). We tested this empirically on MNIST and found
that dividing the channel reduces per-carrier signal strength, degrading
detection accuracy significantly compared to the single-carrier design.

\begin{table}[h]
\centering
\caption{Multiplexing ablation on MNIST. Detection accuracy degrades when
dividing the channel.}
\label{tab:multiplexing}
\begin{tabular}{lccc}
\toprule
Method & Carriers/Segments & Bits & Accuracy \\
\midrule
Single-carrier & 1 & 5 & 100\% \\
FDM-2 & 2 & 4 & 75\% \\
FDM-4 & 4 & 12 & 75\% \\
TDM-2 & 2 & 4 & 75\% \\
TDM-4 & 4 & 12 & 58\% \\
\bottomrule
\end{tabular}
\end{table}

The fundamental limitation is that dividing the channel among multiple
carriers reduces the effective signal strength per carrier, as the
total perturbation budget is shared. This lowers the per-carrier SNR
and degrades detection accuracy, as observed in our experiments. We therefore adopt the single-carrier design. A rigorous study of multi-carrier random coding in this setting is left to future work.

\section{Generated Samples}\label{app:samples}

\begin{figure}[h]
\centering
\includegraphics[width=\textwidth]{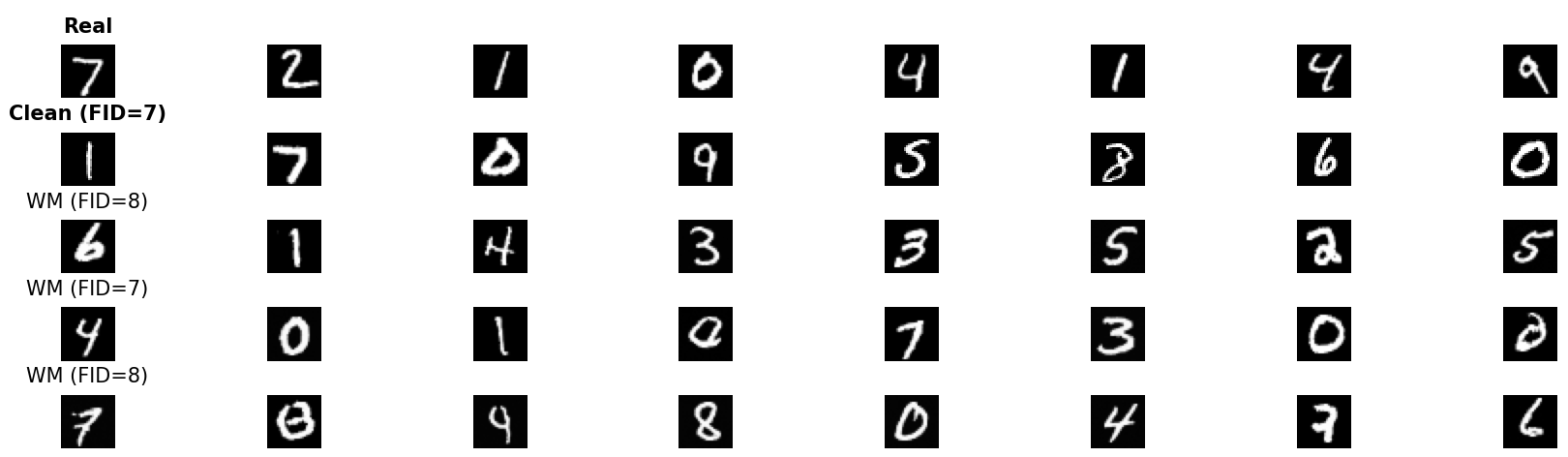}
\caption{Generated MNIST samples from UNet + LoRA models.
Top row: real MNIST digits. Second row: clean UNet model.
Rows 3--5: watermarked LoRA models.}
\label{fig:samples_mnist_unet}
\end{figure}

\begin{figure}[h]
\centering
\includegraphics[width=\textwidth]{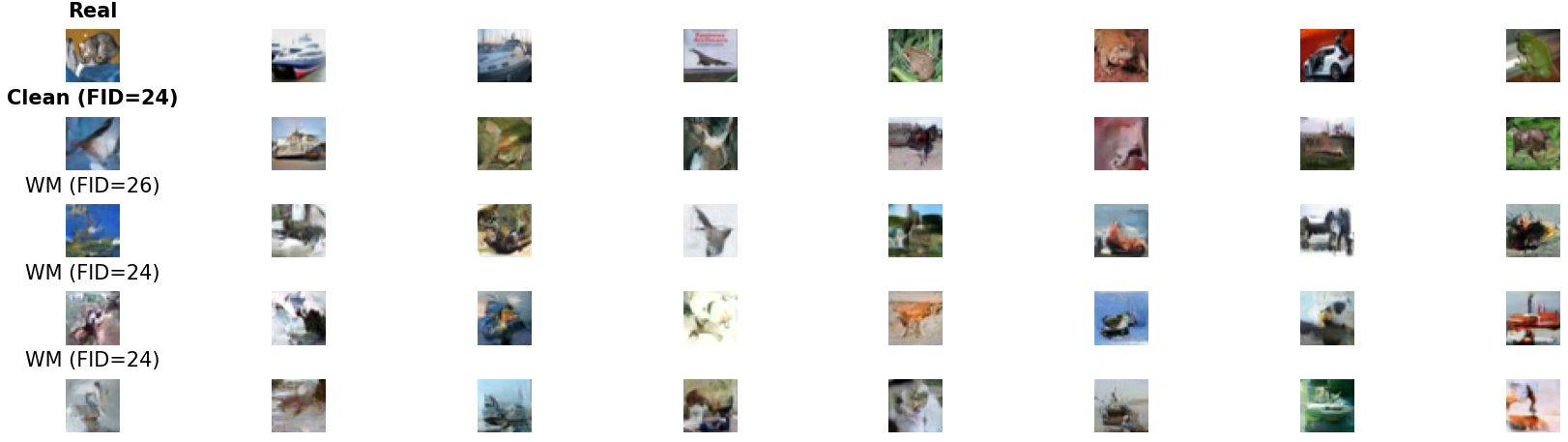}
\caption{Generated CIFAR-10 samples from UNet + LoRA models.
Top row: real CIFAR-10 images. Second row: clean UNet model.
Rows 3--5: watermarked LoRA models.}
\label{fig:samples_cifar10}
\end{figure}

\section{Hyperparameters}\label{app:hyperparams}

\begin{table}[h]
\centering
\caption{Complete hyperparameter configuration for all experiments.}
\label{tab:hyperparams}
\begin{tabular}{lcccc}
\toprule
Hyperparameter & MNIST (MLP) & MNIST (MLP 10K) & MNIST (UNet) & CIFAR-10 (UNet) \\
\midrule
\multicolumn{5}{c}{\textit{Data}} \\
\quad Image size & $28 \times 28$ (gray) & $28 \times 28$ (gray) & $28 \times 28$ (gray) & $32 \times 32$ (RGB) \\
\quad Training samples & $12{,}000$ & $12{,}000$ & $60{,}000$ & $50{,}000$ \\
\midrule
\multicolumn{5}{c}{\textit{Architecture}} \\
\quad Model & 4-layer MLP & 4-layer MLP & UNet & UNet \\
\quad Hidden size / ch. & $1{,}024$ & $1{,}024$ & 64--256 & 64--256 \\
\quad Activation & SiLU & SiLU & SiLU & SiLU \\
\quad LoRA rank & -- & -- & 4 & 4 \\
\quad LoRA $\alpha$ & -- & -- & 8 & 8 \\
\midrule
\multicolumn{5}{c}{\textit{Flow Matching}} \\
\quad Interpolation & Linear & Linear & Linear & Linear \\
\quad Training steps (base) & $5{,}000$ & $10{,}000$ & $5{,}000$ & $5{,}000$ \\
\quad Training steps (WM) & $5{,}000$ & $10{,}000$ & $500$ (LoRA) & $500$ (LoRA) \\
\quad Batch size & $512$ & $512$ & $512$ & $512$ \\
\quad Learning rate & $10^{-3}$ & $10^{-3}$ & $10^{-3}$ & $10^{-3}$ \\
\quad Optimizer & AdamW & AdamW & AdamW & AdamW \\
\midrule
\multicolumn{5}{c}{\textit{Watermark}} \\
\quad Projection dim.\ $k$ & 32 & 32 & 32 & 32 \\
\quad Message bits $L$ & 5 & 5 & 5 & 5 \\
\quad Strength $\epsilon_0$ & 0.3 & 0.2 & 0.2 & 0.2 \\
\quad Loss weight $\lambda$ & 0.02 & 0.018 & 0.01 & 0.01 \\
\midrule
\multicolumn{5}{c}{\textit{Detection}} \\
\quad Query budget $N$ & $4{,}096$ & $4{,}096$ & $4{,}096$ & $4{,}096$ \\
\quad Query distribution & $\mathcal{N}(0, 4I)$ & $\mathcal{N}(0, 4I)$ & $\mathcal{N}(0, 4I)$ & $\mathcal{N}(0, 4I)$ \\
\quad Time sampling & $\text{Uniform}(0,1)$ & $\text{Uniform}(0,1)$ & $\text{Uniform}(0,1)$ & $\text{Uniform}(0,1)$ \\
\midrule
\multicolumn{5}{c}{\textit{Evaluation}} \\
\quad Watermarked models & 6 & 6 & 3 & 3 \\
\quad Clean models & 2 & 2 & 1 & 1 \\
\quad Trials per model & 20 & 20 & 20 & 20 \\
\bottomrule
\end{tabular}
\end{table}

\end{document}